\renewcommand{\algorithmiccomment}[1]{\hfill\textit{// #1}}
\newtheorem{theorem}{Theorem}
\newtheorem{proposition}{Proposition}
\newtheorem{definition}{Definition}
\title{Hierarchical Width-Based Planning and Learning$^\text{\textbf{*}}$}
\author{
   Miquel Junyent, Vicen\c{c} G\'{o}mez, Anders Jonsson \\
}
\begin{document}

\maketitle

\begin{abstract}
Width-based search methods have demonstrated state-of-the-art performance in a wide range of testbeds, from classical planning problems to image-based simulators such as Atari games. These methods scale independently of the size of the state-space, but exponentially in the problem width. In practice, running the algorithm with a width larger than 1 is computationally intractable, prohibiting IW from solving higher width problems. In this paper, we present a hierarchical algorithm that plans at two levels of abstraction. A high-level planner uses abstract features that are incrementally discovered from low-level pruning decisions. We illustrate this algorithm in classical planning PDDL domains as well as in pixel-based simulator domains. In classical planning, we show how $\text{IW}(1)$ at two levels of abstraction can solve problems of width 2. For pixel-based domains, we show how in combination with a learned policy and a learned value function, the proposed hierarchical IW can outperform current flat IW-based planners in Atari games with sparse rewards.
\end{abstract}

\section{Introduction}
The use of hierarchies in planning has proven to be a very successful way for significantly reducing the computational cost of finding good plans. 
Traditional methods include Hierarchical Task Networks~\cite{currie1991plan,erol1996complexity}, macro-actions~\cite{fikes1972learning,korf1985macro}, and state abstraction methods~\cite{sacerdoti1974planning,knoblock}.
Hierarchical planning can lead to exponential gains in complexity by exploiting the structure of a problem involving a reduced subset of the state components.

Iterated Width (IW)~\cite{iw2012ecai} is a search algorithm that makes use of the feature representation of the states to perform structured exploration. The original IW algorithm consists of successive breadth-first searches in which states are pruned if they fail to meet a novelty criterion. In particular, $\text{IW}(w)$ only considers $w$ features at a time, and prunes those states for which all combinations of $w$ features are made true in previously generated states. $\text{IW}(w)$ runs in time and space that are exponential in $w$, but independent of the size of the state space.

Initially proposed as a blind search method for classical planning, IW search has been extended in many different ways, resulting in several competitive width-based planners, including LW1 for partially observable domains~\cite{belief}, or BFWS as an informed (best-first) width search planner~\cite{lipovetzky2017Best-FirstPlanning}.

One particular advantage of width-based planners is that, unlike other classical planners, they do not need a declarative representation of actions, costs or goals~\cite{ijcai2017-600}. Width-based planners are thus directly applicable in simulator environments, achieving state-of-the-art performance in the General Video Game competition~\cite{geffner2015width} and the Atari suite~\cite{ijcai15atari,shleyfman2016blind,bandres2018planning}.

The performance of IW strongly depends on how informative the state features are. Using poorly informed features requires a large value of $w$ to reach a goal state, whereas using highly informative features reduces the problem width and, hence, makes it solvable using a lower value of $w$. This effect is known, e.g., in Atari, where using informative RAM states leads to better results than planning directly with pixels~\cite{bandres2018planning}. How to discover or learn such features to reduce the problem width is an open problem, and several ideas have been proposed, including the use of conjunctive features~\cite{ijcai2017-600} or deep learning methods~\cite{junyent2019,PRL20}.

In practice, IW is mostly used with $w=1$ with complexity linear in the number of features~\cite{geffner2015width,bandres2018planning,uav,PRL20}. In many challenging problems, even $w=2$ with quadratic complexity is unfeasible~\cite{geffner2015width}.
Finding ways to run IW with a larger value of $w$ can further extend the applicability of this class of planners.

In this work, we propose a hierarchical formulation of width-based planning that takes advantage of both the structured search performed by width-based algorithms as well as the concept of hierarchy, which captures explicitly the idea of using state abstraction to reduce effectively the width of a problem. The framework can be combined with other forms of learning to further extend the applicability of width-based planners.

\section{Background}

In this section we define Markov decision processes and the Iterated Width (IW) algorithm, and introduce notation that will be used throughout the paper.

\subsection{Markov Decision Processes}

A Markov decision process (MDP) is modeled as a tuple $M = \langle S,A,P,r \rangle$, where $S$ is a finite set of states, $A$ is a finite set of actions, $P$ is a transition function and $r$ is a reward function. We assume that the transition function $P$ is {\em deterministic}, i.e.,~$P:S\times A\rightarrow S$ maps state-action pairs to next states, while the reward function $r:S\times A\rightarrow\mathbb{R}$ maps state-action pairs to real-valued rewards.

At each time step $t$, a learning agent observes state $s_t\in S$, selects an action $a_t\in A$, transitions to a new state $s_{t+1}=P(s_t,a_t)$ and receives reward $r_t = r(s_t,a_t)$. The aim of the learner is to compute a policy $\pi:S\rightarrow\Delta(A)$, i.e.,~a mapping from states to probability distributions over actions, that maximizes some measure of expected future reward. Here, $\Delta(A)=\{\mu\in\mathbb{R}^{|A|}: \sum_a\mu(a)=1, \mu(a)\geq 0 \; (\forall a)\}$ is the probability simplex over $A$.

The expected future reward associated with policy $\pi$ is governed by a value function $V^\pi$, defined in each state $s$ as
\[
V^\pi(s) = \mathbb{E}_\pi \left[\sum_{t=0}^\infty \gamma^t r(S_t,A_t) \Bigg\vert S_0 = s\right].
\]
Here, $S_t$ and $A_t$ are random variables representing the state and action at time $t$, respectively, satisfying $A_t \sim \pi(S_t)$ and $S_{t+1} = P(S_t,A_t)$ for each $t\geq 0$, and $\gamma\in(0,1]$ is a discount factor. The {\em optimal value function} $V^*$ is given by $V^* = \max_\pi V^\pi$, and the optimal policy $\pi^*$ is the argument achieving this maximum, i.e.,~$\pi^* = \arg\max_\pi V^\pi$.

We assume that there exists a set of features $F$, each with finite domain $D$, and a mapping $\phi:S\rightarrow D^{|F|}$ from states to feature vectors. For each feature $f\in F$ and state $s\in S$, let $\phi(s)[f]\in D$ be the value that $s$ assigns to $f$. It is common to approximate the value function in state $s$ using the feature vector $\phi(s)$ and a parameter vector $\theta$, i.e.,~the estimation of the value in state $s$ is given by $\hat{V}_\theta(s) = g(\phi(s),\theta)$ for some function $g$, e.g.~a neural network.

We can use deterministic MDPs to model goal-directed planning tasks. Such a planning task is also defined by a set of states $S$, a set of actions $A$ and a deterministic transition function $P$. In addition, there is a set of designated goal states $S_G\subset S$. To model the task as an MDP, we make each goal state $s_G\in S_G$ absorbing by defining the transition function as $P(s_G,a)=s_G$ for each action $a\in A$. The reward function is defined as $r(s,a)=1$ if $P(s,a)\in S_G$ and $r(s,a)=0$ otherwise. Hence an optimal policy attempts to reach a goal state as quickly as possible and then stay there.

\subsection{Iterated Width}

Iterated Width (IW)~\cite{iw2012ecai} is a forward search algorithm that explores the state space of a deterministic MDP starting from a given initial state $s_0$. IW was initially developed for goal-directed planning tasks, attempting to find a goal state among the set of explored states. However, the algorithm has later been adapted to deterministic MDPs by instead attempting to maximize expected future reward~\cite{ijcai15atari}.

In its basic form, IW is a blind search algorithm that performs breadth-first search in the space of states, starting from $s_0$. However, unlike standard breadth-first search, IW uses a novelty measure to prune states. The novelty measure critically relies on the feature vector $\phi(s)$ associated with each state $s$. Concretely, IW defines a width parameter $w$, and remembers all visited tuples of feature values of size $w$ in a so-called \emph{novelty table}. During search, a state $s$ is considered novel if its associated feature vector $\phi(s)$ contains at least one tuple of feature values of size $w$ that has not been visited before. IW then prunes all states that are not novel.

For a given width $w$, because of pruning, the number of states visited by $\text{IW}(w)$ is exponential in $w$. Since the state space is usually large, $\text{IW}(w)$ is typically provided with a search budget, and terminates when the number of visited states exceeds the budget. Without a search budget, in most domains it is computationally infeasible to execute $\text{IW}(w)$ for $w>2$. However, many planning benchmarks turn out to have small width, at least when considering atomic goals, and in practice they can be solved by $\text{IW}(1)$ or $\text{IW}(2)$.

Several researchers have proposed extensions to IW. Rollout IW~\cite{bandres2018planning} simulates a breadth-first search by repeatedly generating trajectories, or rollouts, from the initial state $s_0$. This is useful in domains for which it is expensive to store states in memory, making it impractical to perform an actual breadth-first search. The $\pi\text{-IW}$ algorithm~\cite{junyent2019} maintains and updates a policy $\pi$, and uses the policy to decide in which order to expand states, rather than exploring blindly.

\section{Complexity of IW($w$)}

In this section we provide a tighter upper bound on the number of states visited by $\text{IW}(w)$. We use $n=|F|$ to denote the number of features, and $d=|D|$ to denote the domain size. We also assume that at most $b$ actions are applicable in each state $s$. In~\citet{ijcai15atari} it was shown that $\text{IW}(w)$ generates at most $b(nd)^w$ nodes.


\begin{proposition}
Let $N(n,d,w)$ denote the maximum number of {\em novel} states visited by $\text{IW}(w)$ for a given pair $(n,d)$. Then, $N(n,d,w)$ is given by the recursive formula
\begin{align*}
    N(n,d,0) &= 1,\\
    N(n,d,n) &= d^n,\\
    N(n,d,w) &= (d-1) N(n-1,d,w-1) + N(n-1,d,w).
\end{align*}

\end{proposition}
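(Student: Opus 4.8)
The plan is to recast the quantity $N(n,d,w)$ as a purely combinatorial extremal number and then recognize its closed form as the volume of a Hamming ball. A state $s$ is novel exactly when its feature vector $\phi(s)\in D^{|F|}$ contains some $w$-tuple --- a choice of $w$ features together with their values --- that no previously generated state exhibits; in particular, two states with equal feature vectors cannot both be novel. Maximizing the number of novel states over all deterministic MDPs with $n$ features of domain size $d$ is therefore equivalent to the following combinatorial problem: find the longest sequence $v_1,\dots,v_m$ of distinct vectors in $D^n$ that can be ordered so that each $v_k$ possesses a $w$-subset $W_k$ of coordinates on which it disagrees with all of $v_1,\dots,v_{k-1}$ (the certificate that $(W_k,v_k|_{W_k})$ is new). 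I would first argue that any worst-case $\text{IW}(w)$ run realizes such a sequence and, conversely, that any such sequence is realizable by a suitable tree-structured MDP, so that $N(n,d,w)$ equals this maximum length.

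I would then guess and verify the closed form $B(n,d,w):=\sum_{i=0}^{w}\binom{n}{i}(d-1)^i$, which is exactly the number of vectors within Hamming distance $w$ of a fixed reference vector $v_0$. Splitting such a ball according to the value of the last coordinate --- equal to $v_0$'s value (the remaining $n-1$ coordinates then lie within distance $w$) or one of the $d-1$ other values (the rest within distance $w-1$) --- yields the claimed recursion, and the base cases are immediate: only the empty tuple exists when $w=0$, so a single state is novel, while the unique $w$-tuple is the full vector when $w=n$, so novelty coincides with distinctness and all $d^n$ vectors qualify. It therefore suffices to prove $N(n,d,w)=B(n,d,w)$, which I would do by establishing both inequalities and letting the recursion follow from Pascal's rule for Hamming balls.

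For the lower bound I would construct an explicit novel sequence of length $B(n,d,w)$ by induction: place an optimal $(n-1,w)$-sequence with the reference value $r$ appended in coordinate $n$, followed, for each of the $d-1$ values $c\neq r$, by an optimal $(n-1,w-1)$-sequence with $c$ appended. A $(w-1)$-certificate on the first $n-1$ coordinates together with the distinguishing value in coordinate $n$ upgrades to a fresh $w$-certificate, because any earlier vector either differs in coordinate $n$ or lies in the same $c$-block and is already separated on those first $n-1$ coordinates; checking this is routine. Equivalently, ordering the radius-$w$ Hamming ball by increasing distance from $v_0$ makes every vector novel, which gives the same bound.

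The upper bound is where the difficulty concentrates. The natural attempt is the matching induction: split an arbitrary novel sequence by the value of coordinate $n$, designate $r:=v_1[n]$ as the reference, show that the $r$-valued vectors project to a width-$w$ novel sequence on $n-1$ features (at most $N(n-1,d,w)$), and that for each $c\neq r$ the $c$-valued vectors project to a width-$(w-1)$ sequence (at most $N(n-1,d,w-1)$ each). The first half is clean, since a certificate using coordinate $n$ reduces, against same-valued predecessors, to a $(w-1)$-certificate that extends back to width $w$. The obstruction is the second half: a non-reference vector may certify its novelty by a $w$-tuple that avoids coordinate $n$ entirely, exhibiting width-$w$ rather than width-$(w-1)$ behavior, so the naive count only gives $d\,N(n-1,d,w)$ and loses the tightness. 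The crux --- and the step I expect to be hardest --- is to show that such certificates can always be re-routed through coordinate $n$, or the offending vectors re-charged to the reference block, so that the shared $(n-1)$-coordinate tuples are never double-counted. Concretely I would try to prove that one can choose each certificate $W_k$ so that the map sending $v_k$ to its disagreements-with-$v_1$ inside $W_k$ is injective into the non-reference partial assignments of size at most $w$, whose number is precisely $B(n,d,w)$; making this choice canonical (e.g.\ maximizing captured disagreements and breaking ties lexicographically) and proving injectivity via an exchange argument is the technical heart of the proof.
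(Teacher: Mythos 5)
Your formalization, your closed form, and your lower bound are all sound: the Hamming-ball count $\sum_{i=0}^{w}\binom{n}{i}(d-1)^i$ does satisfy the stated recursion (via Pascal's rule) and both base cases, it coincides with the closed form of Theorem~1, and ordering the radius-$w$ ball around $\phi(s_0)$ by increasing distance is realizable by a chain-structured MDP in which every state passes the novelty test. The genuine gap is the one you flag yourself: the upper bound $N(n,d,w)\le (d-1)\,N(n-1,d,w-1)+N(n-1,d,w)$ is never established. Partitioning the novel sequence by the value of a distinguished coordinate does not give the tight bound, for exactly the reason you state --- a vector in a non-reference block whose only certificates avoid that coordinate exhibits width-$w$ behavior there, so the naive count is $d\,N(n-1,d,w)$ --- and the proposed repair (a canonical choice of certificates making the map into non-reference partial assignments injective) is only announced as ``the technical heart,'' not carried out. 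Since the recursion \emph{is} the content of the Proposition, leaving that step open leaves the statement unproved.

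It is worth knowing that the paper sidesteps your obstruction by partitioning on \emph{certificate type} rather than on the value of the feature: fixing $f$, all novel states that possess some witnessing $w$-tuple excluding $f$ (whatever value they assign to $f$) project, after deleting $f$, to a width-$w$ novel sequence on $n-1$ features, yielding the $N(n-1,d,w)$ term; the remaining states, novel only via tuples containing $f$, are then grouped by their $f$-value, and within a group the shared value of $f$ forces a fresh $(w-1)$-tuple on the other features, yielding $(d-1)\,N(n-1,d,w-1)$. This is the move that prevents the double counting you worry about. That said, the paper's own justification is explicitly informal --- it presents the recursion as an upper bound on each subset and exhibits tightness only by example --- so if you completed your injection/exchange argument you would actually be supplying something stronger than what the paper offers. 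Your identification of $N(n,d,w)$ with the volume of a Hamming ball is a genuinely cleaner reading of Theorem~1 and is worth keeping regardless.
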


Given $N(n,d,w)$, the number of visited states (including those pruned) is bounded by $N(n,d,w)\cdot b$. There are two base cases: $w=0$, in which case no state is novel apart from $s_0$, i.e.,~$N(n,d,0)=1$, and $w=n$, in which case all states are novel, i.e.,~$N(n,d,n)=d^n$.

The intuition for the recursion is as follows. 
Consider the case where $\text{IW}(w)$ visits the maximum number of states.
Given a feature $f \in F$, we can partition the subset of novel states into two subsets: states that are novel solely due to tuples that include $f$, denoted by $S_f$, and states that are novel (in part) due to tuples that exclude $f$, denoted by $S_{\neg{f}}$. 

Since $f$ is irrelevant in $S_{\neg{f}}$, $\text{IW}(w)$ would generate the same novel states even if we removed $f$. Thus, the maximum amount of novel states in $S_{\neg{f}}$ is bounded by $N(n-1,d,w)$. Regarding $S_f$, we can divide it into $d-1$ subsets, each corresponding to a value of $f$ different from its initial value $v_0=\phi(s_0)[f]$. In each subset, since the value of $f$ is the same, 
the novelty test can be simplified to checking tuples of size $w-1$ of features different than $f$. Therefore, the maximum number of novel states in $S_f$ is $(d-1) \cdot N(n-1,d,w-1)$.

We provide an example in Table \ref{tab:N-rec-example}, in the supplementary material. Note that we are not decomposing the problem into multiple subproblems; rather, the recursion defines an upper bound on the number of novel states in each subset. 



\begin{theorem}\label{thm:N}
For $n$ features of size $d$, the maximum number of novel states visited by $\text{IW}(w)$, $0\leq w<n$, is
\[
	N(n,d,w) = \sum_{k=0}^{w}\left[ \binom{n-1-k}{w-k} d^k (d-1)^{w-k} \right].
\]
\end{theorem}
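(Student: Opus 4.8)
The plan is to prove the closed form by induction on the number of features $n$, using the three-part recursion established in the Proposition. First I would check that the formula reproduces the two boundary conditions. For $w=0$ only the $k=0$ term survives and the sum collapses to $\binom{n-1}{0}d^0(d-1)^0 = 1 = N(n,d,0)$. It is also worth verifying that, with the convention $\binom{-1}{0}=1$, the same expression evaluated at $w=n$ returns $d^n$: every term with $k<n$ carries a factor $\binom{n-1-k}{n-k}=0$, leaving only the $k=n$ term $\binom{-1}{0}d^n=d^n=N(n,d,n)$. This second check is convenient because it lets the induction treat $w=n-1$ uniformly rather than as a separate special case.

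The heart of the argument is to show that the right-hand side, call it $f(n,w)$, obeys exactly the recursion $f(n,w) = (d-1)f(n-1,w-1)+f(n-1,w)$. I would expand both terms on the right. Absorbing the extra factor $(d-1)$ into the power gives
\[
(d-1)f(n-1,w-1) = \sum_{k=0}^{w-1}\binom{n-2-k}{w-1-k}d^k(d-1)^{w-k},
\]
while
\[
f(n-1,w) = \sum_{k=0}^{w}\binom{n-2-k}{w-k}d^k(d-1)^{w-k}.
\]
Adding these, for each $k$ in the range $0 \le k \le w-1$ the coefficient of $d^k(d-1)^{w-k}$ is $\binom{n-2-k}{w-1-k}+\binom{n-2-k}{w-k}$, which by Pascal's rule equals $\binom{n-1-k}{w-k}$; the leftover $k=w$ term, present only in $f(n-1,w)$, contributes $\binom{n-2-w}{0}d^w=d^w$, matching the $k=w$ term $\binom{n-1-w}{0}d^w$ of $f(n,w)$. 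Collecting terms then yields precisely $f(n,w)$.

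With this identity in hand the induction is routine: assuming $N(n-1,d,\cdot)=f(n-1,\cdot)$ for all arguments, the recursion from the Proposition gives $N(n,d,w)=(d-1)f(n-1,w-1)+f(n-1,w)=f(n,w)$ for every $0<w<n$, while the base cases cover $w=0$ and $w=n$. The step I expect to require the most care is the bookkeeping around the two boundary terms of the sum — the $k=w$ term, where the $(d-1)$-power vanishes and only one of the two sub-sums contributes, and the consistency of the binomial conventions when the top index hits $-1$ at $w=n$. Everything else is a direct application of Pascal's identity and careful index tracking.
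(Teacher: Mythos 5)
Your proof is correct and follows essentially the same route as the paper's: induction on $n$ via the recursion from the Proposition, with Pascal's identity $\binom{n-2-k}{w-1-k}+\binom{n-2-k}{w-k}=\binom{n-1-k}{w-k}$ doing the work in the inductive step. The only (harmless) difference is that you fold the $w=n-1$ case into the general argument by observing that the closed form also yields $d^n$ at $w=n$ under the convention $\binom{-1}{0}=1$, whereas the paper treats $w=n-1$ as a separate case using $N(w,d,w)=d^w$ directly.
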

The proof of Theorem~\ref{thm:N} appears in the supplementary material, and also shows that $N(n,d,w)$ is indeed upper bounded by $(nd)^w$, which is consistent with previous results.

\section{Hierarchical IW}
In this section, we present our hierarchical approach to width-based planning. We start by defining a simple algorithm for hierarchical blind search. Then, we consider using width-based planners at all levels of the hierarchy, and show its effect on the width compared to planning at a single level.

For simplicity, WLOG we assume a two-level hierarchy: a high level ($h$) and a low level ($\ell$). 
Each level 
is defined by its own feature set ($F_h$ and $F_\ell$, with domains $D_h$ and $D_\ell$, respectively) and feature mapping ($\phi_h:S\rightarrow D_h^{|F_h|}$ and $\phi_\ell:S\rightarrow D_\ell^{|F_\ell|}$, respectively). Each state $s$ maps to a high-level state $s_h=\phi_h(s)$ and a low-level state $s_\ell=\phi_\ell(s)$.

\subsection{A Hierarchical Approach to Blind Search}
Blind search methods require two components: a successor function, that given a state and an action returns a successor state (e.g. a simulator), and a stopping condition, that will stop the search, for instance, when the goal is reached or after a budget is exhausted. In order to have different search levels, we modify these two components as follows:
\begin{itemize}
    \item \textbf{High-level successor function:} Each call to this function triggers a low level search, that runs until a new high-level state is found (i.e.,~a state $s$ that maps to a different $\phi_h(s)$).
    \item \textbf{Low-level stopping condition:} When a different high-level state is encountered, the search is stopped, returning control to the high-level planner. This stopping condition is added to the existing stopping conditions.
\end{itemize}

The control goes back and forth between the high and low-level planners. Each time that the high-level successor function is called, the according low-level search is resumed, generating new states until a new high-level state is found. We achieve this by storing a low-level search tree for each high-level state. If the low-level search terminates without finding a new high-level state, the high-level successor function returns \textit{null}, and the high-level state is marked as \textit{expanded}. The high-level planner will only generate successors from non-expanded high-level states, and can resume search from any state by retrieving it from memory.

The proposed framework allows many levels of abstraction, as well as the possibility to have different search methods at each level. For instance, we could have a breadth-first search at the high level and depth-first search at the low level, or combine different width-based search methods.

\subsection{Hierarchical Width}
The framework in the previous section partitions the states into subsets based on high-level features. To plan over the subsets, we can use any width-based search method as a high-level planner. For instance, we can apply $\text{IW}(2)$ at the high level and $\text{IW}(1)$ at the low level. We denote this by $\text{HIW}(2,1)$.
We next define a type of high-level feature that we call {\em splitting}, and compare HIW with flat IW, showing the effect of the hierarchy on the width of the problem.
\begin{definition}
A high-level feature $f\in F_h$ is splitting if, for each value $v\in D_h$, the induced subset of states $\{s\in S: \phi_h(s)[f] = v\}$ is a connected graph.
\end{definition}

\noindent\textbf{Example}: consider a simple problem where an agent needs to move along a corridor of length $L$, pick up a key, and go back along the same path to open a door. We can describe this problem using two features: $p$ (the position) and~$k$ (whether or not the key is held). Initially $p=0$ and $k=0$. The goal is $p=0$ and $k=1$. If $k\in F_h$, then $k$ is splitting: when $k$ is false, the agent can still visit all the positions of the corridor, and likewise when $k$ is true.

\begin{theorem}
\label{thm:HIW}
If all features in $F_h$ are splitting, $\text{HIW}(w_h, w_\ell)$ is equivalent to a restricted version of $\text{IW}(w_h+w_\ell)$ with tuples of $w_h$ features from $F_h$ and $w_\ell$ features from $F_\ell$.
\end{theorem}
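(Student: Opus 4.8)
The plan is to show that the two algorithms generate exactly the same set of non-pruned states, by matching the single combined novelty test of the restricted $\text{IW}(w_h+w_\ell)$ against the two decoupled tests that $\text{HIW}(w_h,w_\ell)$ performs (one high-level table plus one low-level table per region). First I would fix notation: an extended tuple of the restricted planner is a pair $(t_h,t_\ell)$, where $t_h$ fixes values of some $w_h$ features of $F_h$ and $t_\ell$ fixes values of some $w_\ell$ features of $F_\ell$, and a state $s$ is novel iff at least one extended tuple present in $(\phi_h(s),\phi_\ell(s))$ is new. The structural observation driving the decoupling is that all states of a region $R(s_h)=\{s:\phi_h(s)=s_h\}$ agree on every high-level subtuple, so inside a region the high part of every extended tuple is constant and novelty collapses to a statement about the low-level features alone.

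Second, I would use the splitting hypothesis to control reachability. The role of splitting is to guarantee that each region is connected, so that once the low-level search enters $R(s_h)$ it can, subject only to its own $\text{IW}(w_\ell)$ pruning, traverse the whole region and hence discover every transition leaving it. Consequently the high-level successor function faithfully reconstructs the true high-level transition graph, and the high-level $\text{IW}(w_h)$ search reaches exactly the high-level states that a width-$w_h$ search on that graph would reach. I would isolate this as a lemma, since without splitting a region could be disconnected and the low-level search, entering through one component, would miss exits available only in another, breaking the correspondence.

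Third, with the high-level frontier fixed, I would run a simultaneous induction over the high-level BFS layers. For each newly expanded region $R(s_h)$ I would compare, state by state, the low-level $\text{IW}(w_\ell)$ pruning inside $R(s_h)$ against the restricted planner's combined test on the same states, using the region-constancy of the high part to argue that a low tuple $t_\ell$ is fresh in the region exactly when some extended tuple $(t_h,t_\ell)$ is fresh for the restricted planner, so the same states survive on both sides.

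The step I expect to be the main obstacle is precisely this last reconciliation, because the restricted planner's novelty table is \emph{coupled across regions}: two distinct high-level states may agree on a $w_h$-subset of features, so a low-level value that is fresh inside $R(s_h)$ may already have been recorded together with that shared high subtuple while searching another region. The per-region low-level tables of $\text{HIW}$ do not see this coupling, so I must argue that under splitting it never changes the \emph{set} of surviving states, for instance by showing that the $\text{HIW}$ exploration order makes the relevant extended tuples first appear in corresponding states of the two searches, so that the shared-subtuple collisions occur identically in both. Making this order argument airtight — and checking the boundary case $w_h=|F_h|$, where each region is determined uniquely and the coupling disappears — is where I would spend most of the effort.
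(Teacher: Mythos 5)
Your overall strategy coincides with the paper's: use the splitting hypothesis to guarantee that each region $R(s_h)=\{s:\phi_h(s)=s_h\}$ is connected, so that the low-level $\text{IW}(w_\ell)$ search can traverse the whole region and the high-level successor function reproduces the true high-level transition graph; then match the two novelty tests region by region. Those first two steps are exactly what the paper does. The gap is in your third step, which you explicitly leave open: you anticipate needing a delicate exploration-order argument to handle the coupling of the restricted planner's table across regions through shared $w_h$-subtuples. You are missing the one observation that makes this coupling harmless, and which is the content of the paper's remark that each state in the low-level search of a new high-level state is novel: the novelty test is \emph{existential}, so a state needs only one fresh extended tuple. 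A high-level state that passes the $\text{IW}(w_h)$ test does so because some $w_h$-subtuple $t_h^\ast$ of it has never appeared in any previously expanded region; since $t_h^\ast$ is constant over the region and globally fresh, the pair $(t_h^\ast,t_\ell)$ is fresh for the restricted planner exactly when $t_\ell$ is fresh within the region. Hence inside every novel region the combined test collapses to the per-region low-level test, independently of which other regions were visited first and of collisions on shared subtuples; no order argument and no special treatment of $w_h=|F_h|$ are needed.

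What your worry does correctly point at --- and what neither your sketch nor the paper's short proof pins down --- is the converse containment: when $w_h<|F_h|$, a high-level state all of whose $w_h$-subtuples have already been seen (hence pruned by $\text{HIW}$ at the high level) may still contain states with a fresh pair $(t_h,t_\ell)$, because $t_\ell$ never occurred in the earlier regions sharing $t_h$; the restricted $\text{IW}(w_h+w_\ell)$ could then expand states that $\text{HIW}$ does not. A fully rigorous argument must either address this case or restrict the claimed state-by-state equivalence to the regions that are novel at the high level. In any event, the order-based reconciliation you propose is not the right tool for the part you flagged as the main obstacle; the existential-tuple observation above closes it in one line.
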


\begin{proof}
Since each feature in $F_h$ is splitting, when we apply $\text{IW}(w_\ell)$ in a high-level state $s_h$, the subset of states induced by $s_h$ is connected. Since the restricted version of $\text{IW}(w_h+w_\ell)$ considers exactly $w_\ell$ features in $F_\ell$, it will explore the same low-level states as $\text{IW}(w_\ell)$. At the high-level, the restricted version of $\text{IW}(w_h+w_\ell)$ considers exactly $w_h$ features in $F_h$, so it will explore the same high-level states as $\text{IW}(w_h)$. Since the tuples in $\text{IW}(w_h+w_\ell)$ involve features in both $F_h$ and $F_\ell$, each state in the low-level search of a new high-level state is novel. Hence $\text{HIW}(w_h, w_\ell)$ explores the same states as the restricted version of $\text{IW}(w_h+w_\ell)$.
\end{proof}

\noindent\textbf{Example (cont.)}: The corridor example has width $2$, since IW needs to keep track of the key and visited position jointly. This example can be solved by $\text{HIW}(1,1)$ using $F_h = \{k\}$ and $F_\ell = \{p\}$, after two low-level searches (one for $k=0$ and one for $k=1$), and visits the same states as IW$(2)$. 

Theorem \ref{thm:HIW} compares $\text{HIW}(w_h,w_\ell)$ to flat $\text{IW}(w_h + w_\ell)$ when all the features in $F_h$ are splitting. However, this is not a necessary condition for $\text{HIW}(w_h,w_\ell)$ to solve problems of width $w_\ell + w_h$. Without splitting features, HIW will not generate the same nodes as the restricted version of IW, but may still find the goal. We empirically show this in the experiments section.

\begin{theorem}\label{thm:HIW_comp}
Let $n_h=|F_h|$ and $d_h=|D_h|$ be the number of high-level features and domain sizes, and define $(n_\ell,d_\ell)$ analogously. The maximum number of novel states expanded by $\text{HIW}(w_h, w_\ell)$ is $N(n_h, d_h, w_h) \cdot N(n_\ell,d_\ell,w_\ell)$.
\end{theorem}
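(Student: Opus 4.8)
The plan is to decompose the count of novel states according to the two levels of the hierarchy and argue that the low-level searches, one per high-level state, are mutually independent, so that the total count factorizes as a product of the two single-level bounds supplied by Theorem~\ref{thm:N}.

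First I would formalize what the high-level search explores. The high-level planner runs $\text{IW}(w_h)$ over the high-level state space determined by the mapping $\phi_h$, which has $n_h$ features each with domain size $d_h$. By Theorem~\ref{thm:N}, applied with parameters $(n_h, d_h, w_h)$, the number of \emph{novel} high-level states this search can visit is at most $N(n_h, d_h, w_h)$. Each such high-level state is precisely the object that triggers an associated low-level search, and, per the framework, each is stored together with its own low-level search tree.

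Next I would bound the work done inside a single high-level region. For a fixed high-level state $s_h$, the low-level successor function performs an $\text{IW}(w_\ell)$ search over the low-level features $F_\ell$ (with $n_\ell$ features of domain size $d_\ell$), using a novelty table that belongs exclusively to that high-level state. Hence the number of novel low-level states produced within the region of $s_h$ is, again by Theorem~\ref{thm:N} with parameters $(n_\ell, d_\ell, w_\ell)$, at most $N(n_\ell, d_\ell, w_\ell)$. Because the framework maintains a separate search tree and novelty table for each high-level state, the low-level searches in distinct high-level regions share no novelty information and can each independently attain this maximum. I would then combine the two bounds: every novel state expanded by $\text{HIW}(w_h, w_\ell)$ belongs to exactly one low-level search, namely the one associated with its high-level state, so summing the per-region bound over the at most $N(n_h, d_h, w_h)$ high-level states yields the product $N(n_h, d_h, w_h)\cdot N(n_\ell, d_\ell, w_\ell)$. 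For tightness of the ``maximum'' claim, I would appeal to the worst-case instances that make Theorem~\ref{thm:N} tight at each level, run so that both factors are simultaneously achieved.

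The step I expect to be the main obstacle is justifying the independence of the low-level searches and, relatedly, that no novel state is counted in more than one region. This rests on the precise bookkeeping of the framework: that a state is expanded only within the low-level search of its own high-level state, and that crossing into a new high-level state returns control to the high level rather than continuing under the old novelty table. Making this partition airtight — every expanded novel state lies in exactly one region, and the regions reuse no novelty entries — is what turns the intuitive product into a rigorous bound.
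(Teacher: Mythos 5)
Your proposal is correct and follows essentially the same route as the paper's proof: apply Theorem~\ref{thm:N} once at the high level to bound the novel high-level states by $N(n_h,d_h,w_h)$, and once per high-level region to bound each low-level search by $N(n_\ell,d_\ell,w_\ell)$, then multiply. The extra care you take in arguing that each novel state belongs to exactly one region and that the per-region novelty tables are independent is a reasonable elaboration of the same argument, which the paper leaves implicit.
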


\begin{proof}
At the high level, $\text{HIW}(w_h, w_\ell)$ applies $\text{IW}(w_h)$, which expands a maximum of $N(n_h, d_h, w_h)$ novel high-level states due to Theorem~\ref{thm:N}. For each novel high-level state, $\text{HIW}(w_h, w_\ell)$ applies $\text{IW}(w_\ell)$, which expands a maximum of $N(n_\ell,d_\ell,w_\ell)$ novel low-level states.
\end{proof}

\noindent Note that the maximum number of novel states expanded by the unrestricted version of $\text{IW}(w_h+w_\ell)$ on the feature set $F=F_h \cup F_\ell$ is $N(n_h+n_\ell, \max(d_h,d_\ell), w_h+w_\ell)$, which is much larger than $N(n_h, d_h, w_h) \cdot N(n_\ell,d_\ell,w_\ell)$ in general.

\noindent\textbf{Example}: The RAM memory in Atari, used in \citet{ijcai15atari}, consists of $n=128$ features with $d=256$ values. For $\text{IW}(2)$, an upper bound on the number of novel states is $N(n,d,w) \sim 5 \cdot 10^8$. If we identify a splitting feature and define $n_h=1$, $n_\ell=127$, and $w_h=w_\ell=1$, the upper bound due to Theorems 2 and 3 is $N(n_h,d,w_h) \cdot N(n_\ell,d,w_\ell) \sim 8 \cdot 10^6$, an improvement of almost two orders of magnitude.

\section{Incremental Hierarchical IW (IHIW)}
In classical planning, the states are defined by a set of atoms, and, although one atom may be more informative than others, there is no hierarchical structure. In this section, we present a simple method for identifying relevant features that may split the state space. Then, we introduce an algorithm that performs a sequence of hierarchical searches, using the aforementioned method to discover new high-level feature candidates at each step. In the experiments section, we test the algorithm in a range of classical planning domains\footnote{The code for all algorithms and experiments described in this paper can be found in https://github.com/aig-upf/hierarchical-IW.}.

\subsection{Discovering High-Level Features}
Consider a search tree generated by $\text{IW}(1)$ for a problem of width $2$. Is it possible to identify features that split the state space, so that the problem can be solved by $\text{HIW}(1,1)$? In this section, we present a simple method for detecting candidate abstract features from a set of features $F$.

We consider all trajectories in the tree and hypothesize that a feature that changes only once before a trajectory is pruned is a good candidate for a high-level feature. Consider again the corridor example in which an agent has to use a key to open a door. $\text{IW}(1)$ prunes any trajectory that repeats a position $p$, and will not solve the problem. However, feature~$k$ splits the state space into two sub-problems: reaching the key ($k$=true), and going back to the door ($k$=false).

We can detect high-level features using the method detailed in Algorithm \ref{algo:findAbstract}. 
For each pruned leaf node, we retrieve the features that are shared with its parent that have not appeared in that branch before. The intuition is that when a splitting feature $f$ changes value, from $v_0$ to $v_1$, the {\em next} state is likely to be pruned by $\text{IW}(1)$, since $v_1$ has just been observed for $f$, and all other features may have been visited when $f$ took value $v_0$.

\begin{algorithm}[t] 
\caption{Method for finding high-level features}
\label{algo:findAbstract}
\begin{algorithmic}
    \State \textbf{Input:} node $n$
	\State $N = \emptyset$
	\If {IsLeaf($n$) \& Depth($n$) $> 2$} 
		\State $P$ = Atoms($n$) $\cap$ Atoms(Parent($n$)) \hfill
		\Comment{common atoms}
		\If{$|P| < |\textnormal{Atoms}(n)|$} \Comment{ensure different state}
			\State $b = \textnormal{Branch}(tree, n)$ \Comment{get branch root$\rightarrow$n}
			\State $B = \bigcup\limits_{i=1}^{\textnormal{Depth}(n)-2} \textnormal{Atoms}(b[i])$ \Comment{all branch atoms}
			\State $N = P - B$ \Comment{keep (branch) novel atoms}
		\EndIf
	\EndIf
	\State \textbf{return} $N$
\end{algorithmic}
\end{algorithm}

\subsection{An Incremental Approach}
A simple algorithm that takes advantage of the previous method would be:
\begin{enumerate}
    \item Perform an $\text{IW(1)}$ search, if the goal is found, return.
    \item Run Alg.~\ref{algo:findAbstract} on the $\text{IW}(1)$ tree to find high-level features.
    \item Run $\text{HIW}(1,1)$ with the discovered high-level features.
\end{enumerate}

\begin{algorithm}[t]
\caption{Incremental Hierarchical IW Search}
\label{algo:serializedHIW}
\begin{algorithmic}
    \State \textbf{Initialize:} $H = \emptyset$, $P$ = List(), \textit{solved} = \textbf{false}
	\While{\textbf{not} \textit{solved}}
	    \State \textit{pruned}, \textit{solved} = HIW($w_h$, $w_l$)
	    \If{\textbf{not} \textit{solved}}
            \State Append($P$, \textit{pruned})
    		\While{$H == \emptyset$}
    		    \If{$P$ is empty}
    		        \State \textbf{return}
    		    \EndIf
    		    \State $n$ = Pop($P$) \algorithmiccomment{Sample pruned node}
    		    \State $H$ = FindAbstractFeatures($n$) \algorithmiccomment{Algorithm \ref{algo:findAbstract}}
    		\EndWhile
    		\State $h$ = Pop(H) \algorithmiccomment{Sample candidate atom}
    		\State RestructureTree($h$) \algorithmiccomment{Create high-level nodes}
    	\EndIf
	\EndWhile
\end{algorithmic}
\end{algorithm}

This algorithm actually finds promising candidate features for small problems. For instance, it can solve the simple corridor example. However, it fails on more complex problems, possibly because a single $\text{IW}(1)$ search may not be sufficient to visit states that contain relevant features.

To address this, we propose a slightly more sophisticated approach, Incremental HIW (Algorithm \ref{algo:serializedHIW}), that runs a series of HIW searches. It maintains a set of high-level feature candidates $H$, exploits one feature candidate at a time, and discovers new relevant features when necessary. First, we run $\text{HIW}(1,1)$, which is equivalent to $\text{IW}(1)$ since we start with $H=\emptyset$. While the task is not solved, we randomly sample a pruned node and update $H$ using Algorithm~\ref{algo:findAbstract}. We may repeat this operation until new feature candidates are found or there are no more pruned nodes to sample from, in which case we stop the search. Then, a feature candidate is sampled from $H$, and the current search tree is restructured accordingly, in order to reuse the tree in the subsequent search.

Restructuring the tree mainly involves two operations: detaching subtrees at the low level and inserting new nodes at the high level. Although this may seem costly, both operations consist of modifying the data structure, while leaving the data untouched. Modifying a search tree, however, implies that the associated novelty table cannot be reused. Thus, we generate a new novelty table, if necessary, when the according tree search is resumed.

\section{Learning with Hierarchy}

In this section we show how to combine HIW with a learning-based approach that uses a policy to direct search.

\subsection{Count-Based Rollout IW}
\citet{bandres2018planning} presented Rollout IW (RIW), a width-based algorithm
that performs breadth-first search implicitly, from independent rollout trajectories. $\text{RIW}(w)$ maintains the notion of width by modifying the definition of novelty: a state $s$ is considered novel if any $w$-tuple of features of $s$ has not appeared at a lower depth. With this, the authors achieve an algorithm that is equivalent to $\text{IW}(w)$, but with better anytime behavior. This novelty measure actually allows for many width-based algorithms, since it unties the order of expanding nodes from the novelty test.

In our scenario, a subset of states is encapsulated under the same high-level state (i.e., a set of high-level features). Selecting one high-level state or another directly determines which low-level states are generated. In order to balance exploration within high-level states, we extend RIW with a selection method that depends on state visitation counts.

Our method, named Count-based Rollout IW, is detailed in Algorithm \ref{algo:countbasedRIW}. It requires an OPEN list $O$, a mapping from features to counts $C$, a mapping from feature tuples to unpruned nodes $N$, and a depth-based novelty table $D$, which are all empty at the beginning.
Similar to RIW, it consists of two phases: node selection and rollout. A node from $O$ is selected according to a softmax probability distribution inversely proportional to the visitation counts of its feature vector. Then, a rollout is performed until a node that does not pass the novelty test is found. In  this  case, the novelty test function returns the set of novel tuples $T$.

For each new novel node $n$, with novel tuples $n.T$, there may be other nodes deeper in the tree that were initially novel due to one or more tuples of $n.T$, which may need to be pruned. We identify such nodes with the mapping $N$. Then, for each tuple $t \in n.T$, we can check which other node $o=N[t]$ was novel due to $t$, and remove $t$ from its set of novel nodes $o.T$ before setting $N[t]=n$. In the case the set becomes empty, the node should be pruned (i.e., removed from the open list, together with its descendants). This is done in function \textit{PruneOther}.

\begin{algorithm}[t]
\caption{Count-Based Rollout IW}
\label{algo:countbasedRIW}
\begin{algorithmic}

\Function{Select}{$O$, $C$}
    \State $c$ = GetCounts($O$, $C$) \Comment{Feature counts of nodes in $O$}
    \State $p \propto \exp\left(1/\tau(c+1)\right)$
    \State $n$ = Sample($O$, $p$)
    \State \textbf{return} $n$
\EndFunction
\vspace{5pt}
\Function{Rollout}{$n$, $O$, $N$, $C$, $D$}
\While{\textbf{not} StopCondition()}
        \State $C$[$n$.\textit{features}]++
        \State $s$ = Successor($n$)
        \If{s == \textit{null}}
            \State Remove($n$, $O$) \Comment{Remove $n$ from OPEN list}
            \State \textbf{return}
        \EndIf
        \State $s.T$ = Novel($D$, $s$.\textit{features}, $s$.\textit{depth})
        \If{IsEmpty($s.T$) \textbf{or} IsTerminal($s$)}
            \State \textbf{return}
        \EndIf
        \State PruneOther($s$, $O$, $N$)
        \State Append($O$, $s$) \Comment{Add node to OPEN list}
        \State $n$ = $s$
    \EndWhile
\EndFunction
\end{algorithmic}
\end{algorithm}

\subsection{Modifications to $\pi\text{-IW}$}
\citet{junyent2019} introduced Policy-Guided IW ($\pi\text{-IW}$), an on-line replanning algorithm that alternates planning and learning. $\pi\text{-IW}$ learns a policy $\pi$ from the rewards observed in the IW tree, and uses $\pi$ to guide future searches.
However, in sparse-reward tasks, $\text{IW}(1)$ may not reach any reward, especially when the planning horizon is too short.
Here we extend the original $\pi\text{-IW}$ in two ways: adding a better tie breaking mechanism, and
a value function estimate. In experiments, we call this (flat) version $\pi\text{-IW+}$.

When no reward is found during planning, the target policy for the learning step becomes the uniform distribution, and $\pi\text{-IW}$ behaves as Rollout IW. In this case, $\pi\text{-IW}$ may take a step towards a region of the search tree with low node count, and presumably with less novel states, losing valuable structure information provided by the IW search. To avoid that, we modify the behavior policy of $\pi\text{-IW}$ to use the node counts in the search tree for tie-breaking (i.e., the amount of descendants per action at the root node). The new behavior policy takes the form $\pi^{\textnormal{b}} \propto \pi^{\textnormal{rewards}} \cdot \pi^{\textnormal{counts}}$, where the product is element-wise, and $\pi^{\textnormal{counts}}$ is a softmax distribution:
\[
    \pi^{\textnormal{counts}}{(a|s)} \propto \exp{(1/(\tau c(s,a) + 1))},
\]
where $\tau$ is a temperature parameter and $c(s,a)$ is the amount of nodes in the subtree of action $a$.
The temperature parameter for $\pi^{\textnormal{rewards}}$, which is also defined as a softmax distribution but proportional to the returns $R(s,a)$, is typically close to zero to ensure a greedy target policy~\cite{junyent2019}. Therefore, by performing the product, we achieve the effect of tie-breaking, especially if the temperature parameter for the counts is some orders of magnitude higher than the one for the rewards.

This tie-breaking may help finding deeper rewards. However, $\pi\text{-IW}$ will not exploit this information in subsequent episodes, since $\pi^\textnormal{rewards}$ is still based on the rewards of the \emph{current} planning horizon. To amend this, we learn a value function, which we combine with the observed rewards to generate a better estimate of $\pi^{\textnormal{rewards}}$. When backpropagating the rewards from the leaves to the root, we take the maximum between the observed rewards and our value estimate.

To learn a parameterized policy estimate $\widehat\pi_\theta$, we follow the same approach of \citet{junyent2019}. Specifically, we represent $\widehat\pi_\theta$ using a neural network, and at each time step $t$, we use the cross-entropy loss to update $\theta$:
\begin{align*}
\mathcal{L}=-\pi_{t}^{\textnormal{rewards}}(\cdot|s_t)^\top\log\widehat\pi_\theta(\cdot|s_t).
\end{align*}
The difference in our work is that we additionally learn a value function, taking the same approach as in MuZero~\cite{schrittwieser2019mastering}, but using the Monte-Carlo return target. We also add an $\ell$-2 regularization term.


\subsection{Policy-Guided Hierarchical IW ($\pi\text{-HIW})$}
Hierarchical IW can be straightforwardly used for online replanning. At each step, we sample an action $a \sim \pi^{\textnormal{b}} \propto \pi^{\textnormal{rewards}} \cdot \pi^{\textnormal{counts}}$. To generate $\pi^{\textnormal{rewards}}$, we need to backpropagate the rewards through the hierarchical tree. Starting from the high-level leaf nodes, we first backpropagate the rewards of the associated low-level trees. Then, to propagate this return between two high-level nodes, we feed it to the corresponding low-level leaf nodes of the high-level parent, and repeat until we reach the high-level root. To generate $\pi^{\textnormal{counts}}$, we backpropagate the counts in a similar manner.

After executing an action $a$, we cache the resulting subtree for subsequent searches, similar to previous work. In this case, we need to take into account that some high-level states will not be reachable anymore, and we should thus remove them from the high-level tree before resuming the search.

\section{Experiments in Classical Planning}
In this section, we evaluate experimentally the proposed hierarchical approach. We address the following questions:
\begin{itemize}
\item In practice, can $\text{HIW}(1,1)$ solve problems of width $2$?
\item Can Algorithm~\ref{algo:findAbstract} find good high-level feature candidates?
\item Is $\text{IHIW}(1,1)$ a good alternative to $\text{IW}(2)$?
\end{itemize}

\citet{iw2012ecai} empirically showed that most classical planning problems with atomic goals present a low width. In Table~\ref{tab:pddl}, we reproduce such results, and compare them to our algorithm. The table consists of 36 domains from the International Planning Competitions, prior to 2012. For each domain, we show the amount of single goal instances (I), generated by splitting each instance with $G$ goal atoms into $G$ single goal instances. Columns 3-11 show the amount of instances solved, together with the average number of nodes and time per solved instance, for $\text{IW}(1)$, $\text{IW}(2)$ and $\text{IHIW}(1,1)$. Here, $\text{IHIW}(1,1)$ consists of two standard $\text{IW}(1)$ searches, one at each level of abstraction.

In some domains, $\text{IW}(1)$ has greater coverage than $\text{IW}(2)$, e.g. in Woodworking. This is because we set a budget of $10K$ nodes, and $\text{IW}(2)$ may exhaust the budget before finding the goal. We observe that $\text{IHIW}(1,1)$ outperforms $\text{IW}(1)$ in all but five domains: Barman, OpenStacks, Parking, Scannalyzer and Woodworking. Compared to $\text{IW}(2)$, $\text{IHIW}(1,1)$ covers more or the same number of instances in 24 out of 36 domains. In 12 cases the average number of nodes per solved instance is lower in $\text{IHIW(1,1)}$ than in $\text{IW}(2)$, and in 18 cases IHIW solved it faster. Note that Table~\ref{tab:pddl} only reports the average time for solved instances. Thus, we may find that IHIW is quicker than $\text{IW}(2)$ even when solving more instances.

With these results we can conclude that $\text{HIW}(1,1)$ can solve problems of width $2$ in practice, and that Algorithm~\ref{algo:findAbstract} is a good approach to identify promising high-level features. Finally, we can state that $\text{IHIW}(1,1)$ is an efficient alternative to $\text{IW}(2)$.

\section{Pixel-Based Testbeds}
In this section, we test our approach, $\pi\text{-HIW}$, in pixel-based gridworld environments and Atari games. We use two levels of abstraction: the high-level planner is Count-based Rollout IW (Algorithm \ref{algo:countbasedRIW}) and the low-level planner is $\pi\text{-IW+}$ (i.e., Rollout IW guided by the current policy estimate). The set of abstract features $\phi_h(s)$ consists of a discretization of the image, similar to the one used in Go-Explore~\cite{ecoffet2019go,Ecoffet_2021}, where the image is divided into tiles and the mean pixel value of each tile is taken as the feature value. Usually, this is further quantized into a smaller subset (e.g. $8$ pixel values). For the low-level set of features, we follow the methodology of \citet{junyent2019} and define $\phi_\ell(s)$ as the boolean discretization of $z(s)$, where $z$ is the last layer of the neural network representing $\widehat\pi_\theta$.

\begin{figure}[t]
\begin{center}
\includegraphics[width=0.25\columnwidth]{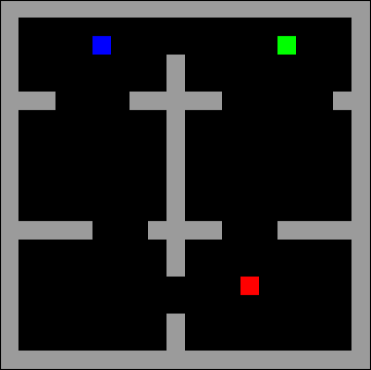}
\hspace{0.15\columnwidth}
\includegraphics[width=0.25\columnwidth]{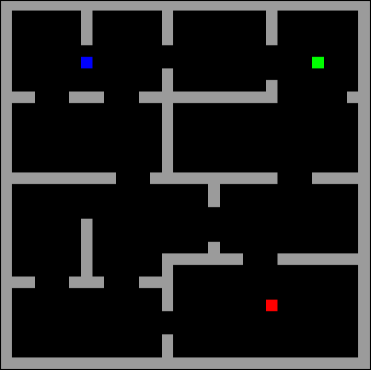}    
\end{center}
\caption{Snapshot of the two gridworld environments. Colors blue, red, green and gray represent the agent, key, door, and walls, respectively. The optimal policy takes 36 and 62 steps for the small (left) and large (right) tasks, respectively.}
\label{fig:envs}
\end{figure}

\begin{table*}[hbt!]
\centering

\begin{tabular}{l|c|ccc|ccc|ccc}
\multirow{2}{*}{Domain} & \multirow{2}{*}{I} & \multicolumn{3}{c|}{$\text{IW}(1)$} & \multicolumn{3}{c|}{$\text{IW}(2)$} & \multicolumn{3}{c}{$\text{IHIW}(1,1)$} \\ & & Solved & Nodes & Time & Solved & Nodes & Time & Solved & Nodes & Time \\
\hline
8puzzle & 32 & 40.6 & 34 & 0.00 & \textbf{100} & 475 & 0.04 & \textbf{100} & \textcolor{blue}{137} & \textcolor{blue}{0.01} \\
Barman & 232 & \textbf{9.1} & 215 & 0.02 & \textbf{9.1} & 215 & 0.13 & \textbf{9.1} & 215 & \textcolor{blue}{0.02} \\
Blocks World & 302 & 37.4 & 91 & 0.01 & 79.5 & 1696 & 0.23 & \textbf{96.4} & \textcolor{blue}{869} & \textcolor{blue}{0.06} \\
Cybersecurity & 86 & 65.1 & 64 & 0.01 & 65.1 & 64 & 0.22 & \textbf{67.4} & 158 & \textcolor{blue}{0.02} \\
Depots & 189 & 10.6 & 494 & 0.28 & 23.8 & 2393 & 1.58 & \textbf{28.0} & \textcolor{blue}{2268} & \textcolor{blue}{0.97} \\
Driverlog & 259 & 44.0 & 996 & 0.12 & 53.3 & 1249 & 0.18 & \textbf{62.9} & \textcolor{blue}{1085} & \textcolor{blue}{0.11} \\
Elevator & 510 & 0.0 & - & - & 11.4 & 5875 & 1.38 & \textbf{16.9} & \textcolor{blue}{4752} & 1.79 \\
Ferry & 8 & 0.0 & - & - & \textbf{100} & 10 & 0.00 & \textbf{100} & 11 & 0.00 \\
Floortile & 538 & 96.3 & 515 & 0.04 & 93.5 & 1115 & 0.63 & \textbf{99.3} & \textcolor{blue}{567} & \textcolor{blue}{0.04} \\
Freecell* & 68 & 8.8 & 192 & 0.14 & \textbf{22.1} & 3558 & 4.00 & 19.1 & 504 & 0.48 \\
Grid & 19 & 5.3 & 2 & 0.00 & \textbf{36.8} & 2071 & 6.45 & 15.8 & 1244 & 2.51 \\
Gripper & 460 & 0.0 & - & - & \textbf{100} & 3355 & 1.70 & \textbf{100} & \textcolor{blue}{2140} & \textcolor{blue}{0.36} \\
Logistics & 249 & 18.1 & 2 & 0.00 & \textbf{100} & 763 & 0.16 & 28.5 & 87 & 0.01 \\
Miconic & 2325 & 0.0 & - & - & 0.0 & - & - & \textbf{100} & 2751 & 0.24 \\
Mprime & 50 & 8.0 & 2 & 0.01 & 18.0 & 3316 & 0.75 & \textbf{20.0} & \textcolor{blue}{2600} & \textcolor{blue}{0.48} \\
Mystery & 45 & 8.9 & 2 & 0.01 & \textbf{37.8} & 1200 & 0.57 & 31.1 & 1903 & 0.37 \\
NoMystery & 210 & 0.0 & - & - & \textbf{80.0} & 1917 & 1.61 & 24.8 & 1487 & 1.22 \\
OpenStacks* & 455 & \textbf{0.0} & - & - & \textbf{0.0} & - & - & \textbf{0.0} & - & - \\
OpenStacksIPC6 & 1230 & 5.1 & 176 & 0.20 & \textbf{14.2} & 2637 & 11.46 & 13.8 & 2332 & 0.37 \\
PSRsmall & 316 & 89.9 & 2 & 0.00 & 92.1 & 2 & 0.00 & \textbf{94.0} & 3 & \textcolor{blue}{0.00} \\
ParcPrinter & 990 & 85.6 & 195 & 0.01 & 84.6 & 695 & 0.63 & \textbf{92.0} & \textcolor{blue}{464} & \textcolor{blue}{0.03} \\
Parking & 540 & \textbf{66.3} & 2770 & 2.28 & 65.2 & 2963 & 5.79 & \textbf{66.3} & \textcolor{blue}{2770} & \textcolor{blue}{2.27} \\
Pegsol & 990 & 92.6 & 4 & 0.00 & \textbf{100} & 9 & 0.01 & 97.8 & 7 & 0.00 \\
Pipes-NonTan & 259 & 45.6 & 299 & 0.08 & 55.6 & 1937 & 0.85 & \textbf{57.5} & \textcolor{blue}{683} & \textcolor{blue}{0.17} \\
Rovers* & 488 & 31.6 & 2520 & 0.37 & 23.2 & 2504 & 1.59 & \textbf{35.2} & 2576 & \textcolor{blue}{0.37} \\
Satellite* & 1324 & 5.7 & 367 & 0.19 & 7.2 & 675 & 0.23 & \textbf{7.9} & 1433 & \textcolor{blue}{0.22} \\
Scanalyzer & 648 & \textbf{99.1} & 370 & 0.29 & 96.6 & 322 & 0.66 & \textbf{99.1} & 370 & \textcolor{blue}{0.28} \\
Sokoban & 154 & 35.1 & 37 & 0.01 & \textbf{74.0} & 1049 & 5.36 & 40.3 & 84 & 0.01 \\
Storage & 240 & \textbf{100} & 327 & 1.87 & \textbf{100} & 1035 & 15.76 & \textbf{100} & \textcolor{blue}{327} & \textcolor{blue}{1.88} \\
Tpp* & 118 & 0.0 & - & - & \textbf{44.9} & 3313 & 26.01 & 35.6 & 1476 & 0.19 \\
Transport & 330 & 0.0 & - & - & 11.8 & 3765 & 1.20 & \textbf{18.5} & 4230 & 1.96 \\
Trucks & 345 & 0.0 & - & - & \textbf{11.6} & 5158 & 0.77 & 1.7 & 3342 & 0.47 \\
Visitall & 21880 & \textbf{100} & 2918 & 1.83 & 16.9 & 2912 & 1.34 & \textbf{100} & 2918 & 1.83 \\
Woodworking & 1801 & \textbf{91.6} & 1110 & 0.29 & 88.3 & 1063 & 3.43 & \textbf{91.6} & 1110 & \textcolor{blue}{0.29} \\
Zeno & 219 & 21.0 & 10 & 0.00 & \textbf{36.5} & 1740 & 0.18 & 29.2 & 1035 & 0.10 \\

\hline
 & & 7 & & & 17 & & & 24 & \textcolor{blue}{12} & \textcolor{blue}{18}

\end{tabular}
\caption{Comparison between $\text{IW}(1)$, $\text{IW}(2)$ and $\text{IHIW}(1,1)$ in different classical planning domains. Column I shows the number of single goal instances. In domains with an asterisk not all available instances were evaluated due to time or memory constraints. In columns 3-11 we show, for each algorithm, the coverage in percentage, the average amount of expanded nodes, and the average time in seconds. Nodes and time values only take into account solved instances. All algorithms have a planning budget of 10,000 nodes. Best coverage in bold, IHIW times or nodes that are lower than the ones of $\text{IW}(2)$ are shown in blue.}
\label{tab:pddl}
\end{table*}

\begin{figure*}[!ht]
\begin{center}
\includegraphics[width=0.47\linewidth]{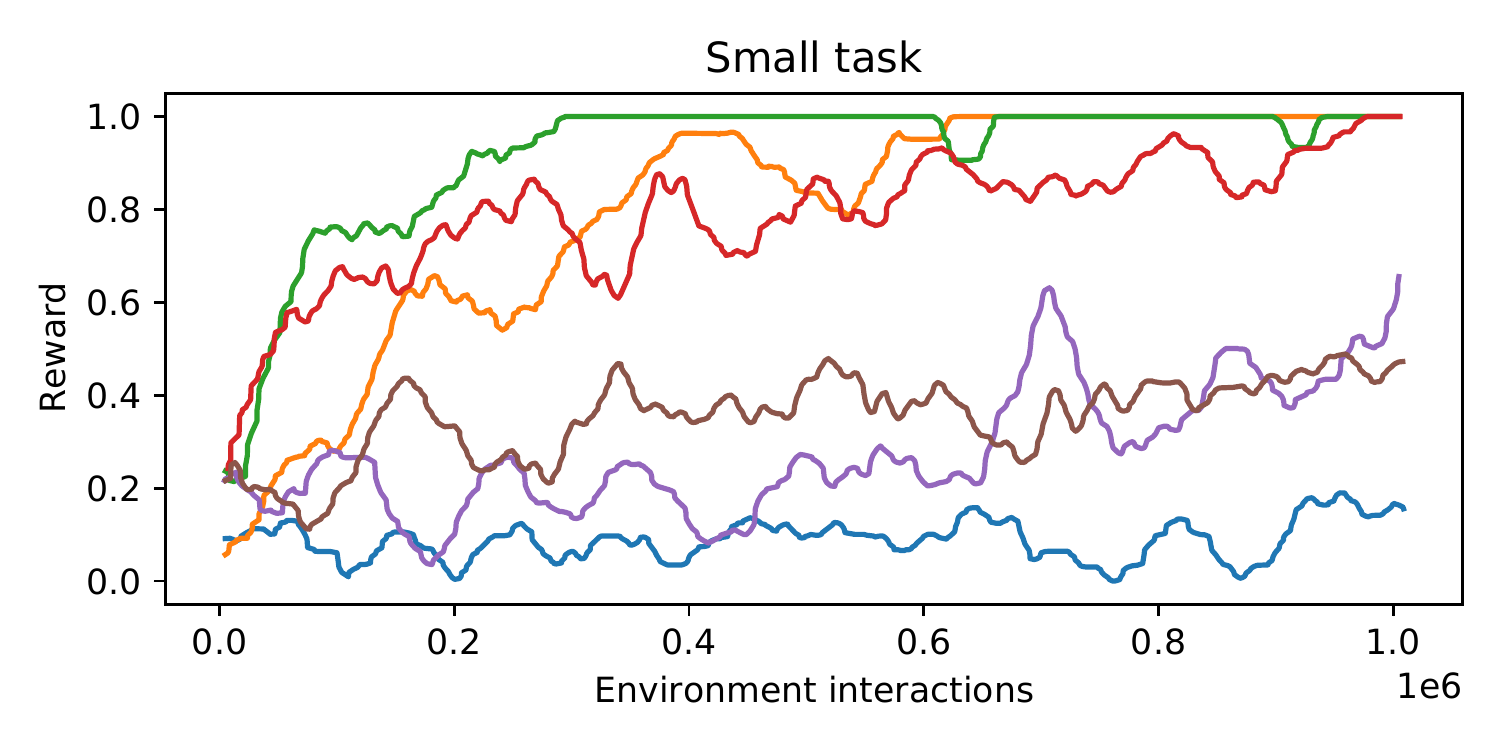}
\includegraphics[width=0.47\linewidth]{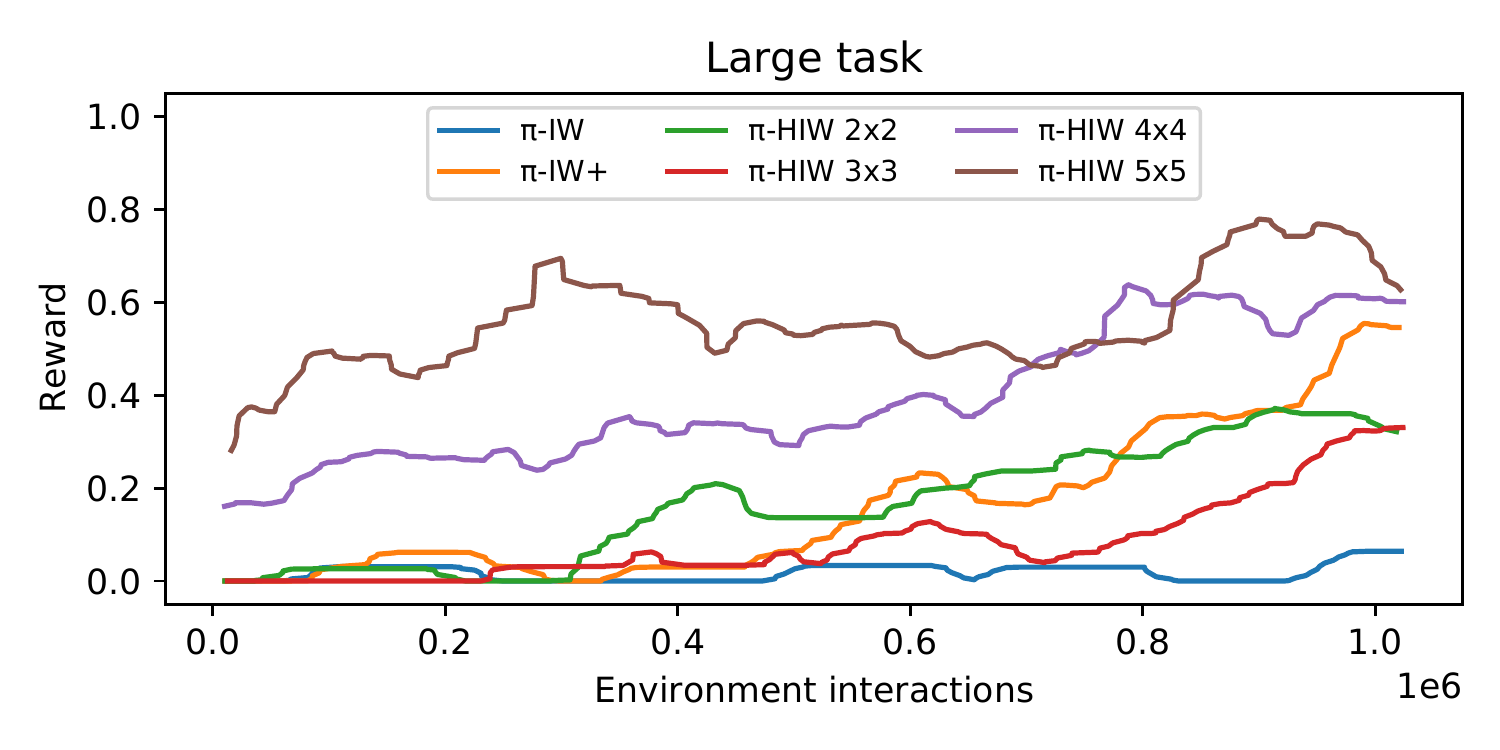}
\end{center}
\caption{Comparison between $\pi\text{-IW}$, $\pi\text{-IW+}$, and $\pi\text{-HIW}(1,1)$ in the small and large gridworld environments.}
\label{fig:grid-plots}
\end{figure*}

\begin{figure*}[t]
\begin{center}
\includegraphics[width=\textwidth]{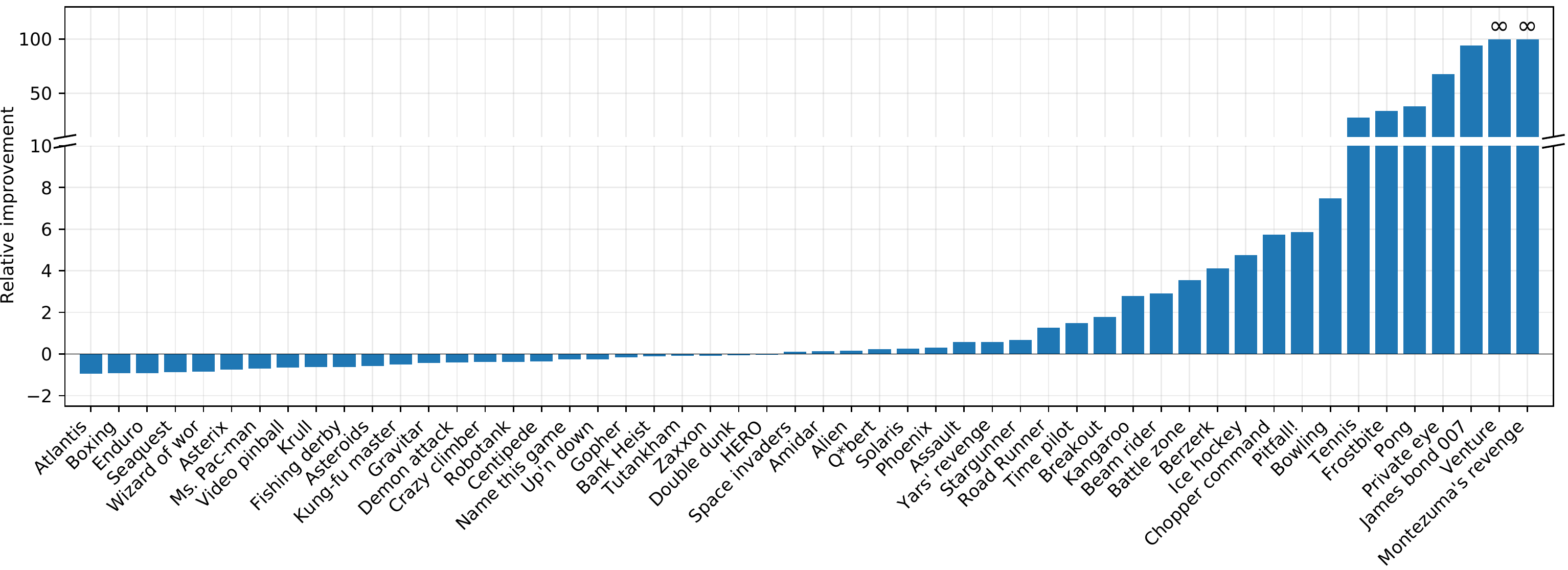}
\end{center}
\caption{Relative improvement of $\pi\text{-HIW}$ over $\pi\text{-IW}$ in Atari, $(s_{\pi\text{-HIW}}-s_{\text{random}})/(s_{\pi\text{-IW}}-s_{\text{random}})-1$, where 
$s_{\pi\text{-IW}}$ and $s_{\pi\text{-HIW}}$ are the scores of
the flat and hierarchical versions, respectively, and $s_{\text{random}}$ is the score of a random agent \cite{wang2016dueling}.
For Montezuma and Venture, the relative improvement is $\infty$, since $\pi\text{-IW}$ has $0$ score. Skiing is not shown since $s_{\pi\text{-IW}} < s_{\text{random}}$.
}
\label{fig:atari-all}
\end{figure*}

\subsection{Gridworld Environments}
We test our algorithm in two gridworld environments with sparse rewards (Figure \ref{fig:envs}). The agent (blue) has to pick up the key (red) and open the door (green), avoiding walls (gray). The agent is rewarded with $+1$ only when the door is reached while holding the key. Any other state has a reward of $0$, except if the agent hits a wall, in which case the episode terminates with a reward of $-1$. We also end the episode after $200$ and $500$ steps for the small and large environment, respectively. The observation is a $84\times 84\times 3$ image and possible actions are \{no-op, up, down, left, right\}. The setting is similar to the one of \citet{junyent2019}, but with larger environments and therefore sparser rewards.

We compare our hierarchical approach, $\pi\text{-HIW}(1,1)$, to two baselines: $\pi\text{-IW}$, and our modified version $\pi\text{-IW+}$ that uses a value estimate and the subtree size for tie-breaking. For the latter, we use a temperature of $1$ to generate $\pi^{\textnormal{counts}}$. In order to bound the memory used by the planner, we set a maximum of $500$ nodes that we keep in memory per step. The visitation count temperature used by the high-level planner (Algorithm \ref{algo:countbasedRIW}) is set to $0.005$. All other hyperparameters are the same as in \citet{junyent2019}.

Figure \ref{fig:grid-plots} shows results for both environments. We observe that $\pi\text{-IW}$ does not perform well, obtaining a reward close to zero in both environments. $\pi\text{-IW+}$ takes advantage of the value function and the tie-breaking counts and learns to solve the first task, while achieving a mean score of $0.5$ for the second one in $10^6$ interactions with the environment. For the hierarchical version, which also includes the aforementioned modifications, we report results of $\pi\text{-HIW}(1,1)$ using different number of tiles in $\phi_h(s)$, and $256$ values per tile. We observe how, for the smaller task, 2x2 tiles is enough to get a good performance, similar to the baseline $\pi\text{-IW+}$, and the performance degrades when increasing the number of tiles. In the larger task, $\pi\text{-HIW}(1,1)$ outperforms the baseline, but it needs at least 4x4 tiles to perform well.

\subsection{Atari Games}
We finish this section with a set of experiments using the Atari simulator. In this case, we do not optimize the hyper-parameters and define $F_h$ using $32$ pixels values and $8 \times 11$ tiles.
Moreover, we use width $w_h=n=|F_h|$ at the high level, i.e., $\pi\text{-HIW}(n,1)$. Even though $\text{IW}(n)$ explores the entire high-level state space, there is a single combination of $n$ features, which makes the novelty check efficient. In the original IW algorithm, $\text{IW}(n)$ is equivalent to a breadth-first search without state duplicates. Nevertheless, we use Count-Based Rollout IW, described in Algorithm \ref{algo:countbasedRIW}. With this, we aim to achieve effective widths larger than $2$.

\begin{figure}[t]
\begin{center}
\includegraphics[width=1.02\columnwidth]{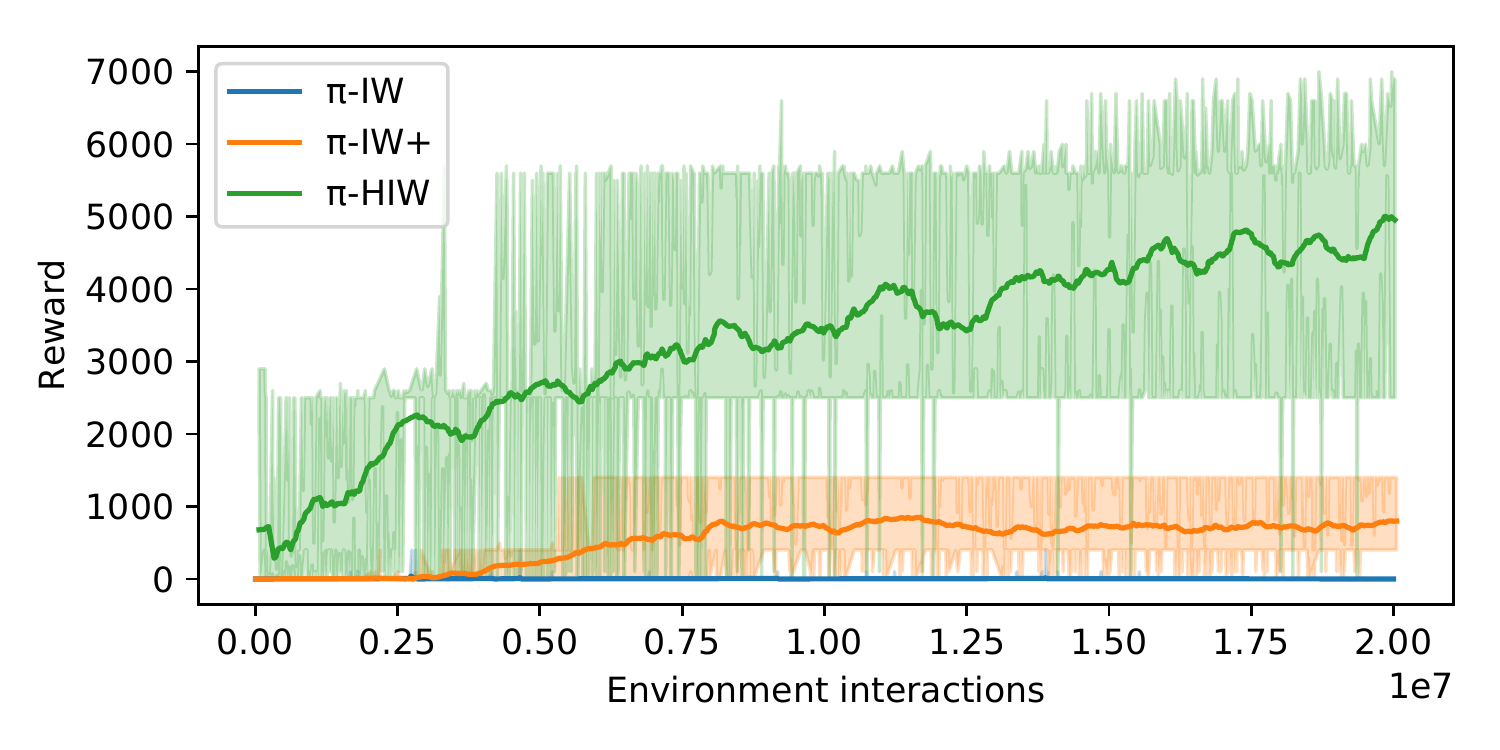}
\end{center}
\caption{Performance of $\pi\text{-IW}$, $\pi\text{-IW+}$ and $\pi\text{-HIW}$ in Montezuma's Revenge. Average over $5$ runs with different random seeds. Shades show the maximum and minimum value.}
\label{fig:atari-montezuma}
\end{figure}

Figure~\ref{fig:atari-all} shows a comparison between $\pi\text{-HIW}(n, 1)$ and $\pi\text{-IW}$ using the same setup as in~\citet{junyent2019}, but half the budget of simulator interactions. The full results are given in Table \ref{tab:atari}.
We observe that $\pi\text{-HIW}$ improves over its predecessor $\pi\text{-IW}$ in 28 games. 
Interestingly, games consisting of an agent moving in a fixed background present the best results e.g., James Bond, Private Eye, Pong, Frostbite, etc. %
Within this type of games, $\pi\text{-HIW}$ remarkably achieves a positive score in hard exploration games such as Montezuma's Revenge and Venture, a score not yet reported for any width-based planner.
Figure~\ref{fig:atari-montezuma} shows the learning curve in the game of Montezuma's Revenge. 
We also see an improvement in games with a moving background where the agent stays at a fixed position, for instance in Battle zone, Beam Rider, or Time Pilot. 
These results confirm that $\pi\text{-HIW}$  benefits from the state abstractions provided by a simple down-sample of the image. 

\section{Conclusions}

We presented a novel hierarchical approach to width-based planning. Our method uses different feature mappings to create several levels of abstraction, allowing different search algorithms at different levels of the planning hierarchy. Specifically, we propose to use Iterated Width at two levels, resulting in the hierarchical search algorithm $\text{HIW}(w_h, w_\ell)$. We show that $\text{HIW}(w_h, w_\ell)$ can solve problems of width $w_h + w_\ell$ with the right choice of high-level features. Experiments in planning benchmarks show that an incremental version of $\text{HIW}(1,1)$ is competitive with $\text{IW}(2)$, solving single-goal instances using less time or nodes. When combined with a policy learning scheme, HIW achieves a positive score in hard exploration Atari games such as Montezuma's Revenge. For future work, a promising approach is to explore different combinations of search algorithms at different levels of the hierarchy.

\section{Acknowledgements}
V. G\'omez has received funding from ``La Caixa'' Foundation (100010434), under the agreement LCF /PR/PR16/51110009 and is supported by the Ramon y Cajal program RYC-2015-18878
(AEI/MINEICO/FSE,UE). A. Jonsson is partially supported by Spanish grants PID2019-108141GB-I00 and PCIN-2017-082.

\bibstyle{abbrv}
\bibliography{bibliography.bib}

\onecolumn
\newpage
\appendix
\section{Proof of Theorem~\ref{thm:N}}

Here we prove Theorem~\ref{thm:N}, which states that for $n$ features with bounded domain size $d$, the maximum number of novel nodes expanded by IW($w$), $0\leq w<n$, is given by
\[
    N(n,d,w) = \sum_{k=0}^{w}\left[ \binom{n-1-k}{w-k} d^k (d-1)^{w-k} \right].
\]
The proof is by induction on pairs of integers $(n,w)$. The base case is given by $(n,0)$, in which case we have
\[
N(n,d,0) = \sum_{k=0}^{0}\left[ \binom{n-1-k}{0-k} d^0 (d-1)^{0-k} \right] = \binom{n-1}{0} d^0 (d-1)^0 = 1.
\]
For $(n,w)$ such that $0<w<n-1$, by hypothesis of induction we assume that Theorem~\ref{thm:N} holds for $(n-1,w-1)$ and $(n-1,w)$. Applying the recursive definition yields
\begin{align*}
	N(n,d,w) &= (d-1) N(n-1,d,w-1) + N(n-1,d,w)\\
	&= (d-1)\sum_{k=0}^{w-1}\left[ \binom{n-2-k}{w-1-k} d^k (d-1)^{w-1-k} \right] + \sum_{k=0}^w\left[ \binom{n-2-k}{w-k} d^k (d-1)^{w-k} \right]\\
	&= \sum_{k=0}^{w-1}\left[\left( \binom{n-2-k}{w-1-k} + \binom{n-2-k}{w-k} \right) d^k (d-1)^{w-k} \right] + \binom{n-2-w}{0} d^w (d-1)^0\\
	&= \sum_{k=0}^{w-1}\left[ \binom{n-1-k}{w-k} d^k (d-1)^{w-k} \right] + \binom{n-1-w}{0} d^w (d-1)^0\\
	&= \sum_{k=0}^w \left[ \binom{n-1-k}{w-k} d^k (d-1)^{w-k} \right].
\end{align*}
Here, we used the identities $\binom{n-1}{m-1}+\binom{n-1}{m}=\binom{n}{m}$, $0<m<n$, and $\binom{n}{0}=1=\binom{n+1}{0}$.

For $(n,w)$ such that $w=n-1$, by hypothesis of induction we assume that Theorem~\ref{thm:N} holds for $(n-1,w-1)$. Applying the recursive definition yields
\begin{align*}
	N(n,d,w) &= (d-1) N(n-1,d,w-1) + N(n-1,d,w)\\
	&= (d-1)\sum_{k=0}^{w-1}\left[ \binom{n-2-k}{w-1-k} d^k (d-1)^{w-1-k} \right] + d^w\\
	&= \sum_{k=0}^{w-1}\left[ \binom{n-1-k}{w-k} d^k (d-1)^{w-k} \right] + \binom{n-1-w}{0} d^w (d-1)^0\\
	&= \sum_{k=0}^w \left[ \binom{n-1-k}{w-k} d^k (d-1)^{w-k} \right].
\end{align*}
Here, we used the definition $N(n-1,d,w)=N(w,d,w)=d^w$ and the identity $\binom{n}{n} = 1 = \binom{n+1}{n+1}$, which is applicable since $(w-1-k)=(n-1-1-k)=(n-2-k)$.

To obtain a compact upper bound on $N(n,d,w)$, we can write
\begin{align*}
    N(n,d,w) &= \sum_{k=0}^{w}\left[ \binom{n-1-k}{w-k} d^k (d-1)^{w-k} \right]\\
	&\leq d^w \sum_{k=0}^{w}\binom{n-1}{w-k} = d^w \sum_{k=0}^{w}\binom{n-1}{k} \leq d^wn^w = (nd)^w,
\end{align*}
where the last inequality follows from the binomial theorem.

\begin{table}[ht]
\centering
\begin{tabular}{c|c|cccccc|c}
    & \textbf{($f_0$, $f_1$, $f_2$, $f_3$)} & $f_0f_1$ & $f_0f_2$ & $f_0f_3$ & $f_1f_2$ & $f_1f_3$ & $f_2f_3$ & \# novel states \\
    \hline
    \multirow{8}{*}{$S_{\neg{f_0}}$} & \textbf{(\textcolor{gray}{0}, 0, 0, 0)} & \textbf{\textcolor{gray}{00}} & \textbf{\textcolor{gray}{00}} & \textbf{\textcolor{gray}{00}} & \textbf{00} & \textbf{00} & \textbf{00} & \multirow{8}{*}{$N(n-1, d, w)$}\\
    & \textbf{(\textcolor{gray}{0}, 0, 0, 1)} & \textcolor{gray}{00} & \textcolor{gray}{00} & \textbf{\textcolor{gray}{01}} & 00 & \textbf{01} & \textbf{01} \\
    & \textbf{(\textcolor{gray}{0}, 0, 1, 1)} & \textcolor{gray}{00} & \textbf{\textcolor{gray}{01}} & \textcolor{gray}{01} & \textbf{01} & 01 & \textbf{11} \\
    & \textbf{(\textcolor{gray}{0}, 0, 1, 0)} & \textcolor{gray}{00} & \textcolor{gray}{01} & \textcolor{gray}{00} & 01 & 00 & \textbf{10} \\
    & \textbf{(\textcolor{gray}{0}, 1, 1, 0)} & \textbf{\textcolor{gray}{01}} & \textcolor{gray}{01} & \textcolor{gray}{00} & \textbf{11} & \textbf{10} & 10 \\
    & \textbf{(\textcolor{gray}{0}, 1, 1, 1)} & \textcolor{gray}{01} & \textcolor{gray}{01} & \textcolor{gray}{01} & 11 & \textbf{11} & 11 \\
    & \textbf{(\textcolor{gray}{0}, 1, 0, 1)} & \textcolor{gray}{01} & \textcolor{gray}{00} & \textcolor{gray}{01} & \textbf{10} & 11 & 01 \\
    & (\textcolor{gray}{0}, 1, 0, 0) & \textcolor{gray}{01} & \textcolor{gray}{00} & \textcolor{gray}{00} & 10 & 10 & 00 \\
    \hline
    \multirow{8}{*}{$S_{f_0}$} & \textbf{(\textcolor{gray}{1}, 1, 0, 0)} & \textbf{\textcolor{gray}{0}1} & \textbf{\textcolor{gray}{0}0} & \textbf{\textcolor{gray}{0}0} & \textcolor{gray}{10} & \textcolor{gray}{10} & \textcolor{gray}{00} & \multirow{8}{*}{$(d-1) \cdot N(n-1, d, w-1)$} \\
    & \textbf{(\textcolor{gray}{1}, 1, 0, 1)} & \textcolor{gray}{0}1 & \textcolor{gray}{0}0 & \textbf{\textcolor{gray}{0}1} & \textcolor{gray}{10} & \textcolor{gray}{11} & \textcolor{gray}{01} \\
    & \textbf{(\textcolor{gray}{1}, 1, 1, 1)} & \textcolor{gray}{0}1 & \textbf{\textcolor{gray}{0}1} & \textcolor{gray}{0}1 & \textcolor{gray}{11} & \textcolor{gray}{11} & \textcolor{gray}{11} \\
    & (\textcolor{gray}{1}, 1, 1, 0) & \textcolor{gray}{0}1 & \textcolor{gray}{0}1 & \textcolor{gray}{0}0 & \textcolor{gray}{11} & \textcolor{gray}{10} & \textcolor{gray}{10} \\
    & \textbf{(\textcolor{gray}{1}, 0, 1, 0)} & \textbf{\textcolor{gray}{0}0} & \textcolor{gray}{0}1 & \textcolor{gray}{0}0 & \textcolor{gray}{01} & \textcolor{gray}{00} & \textcolor{gray}{10} \\
    & (\textcolor{gray}{1}, 0, 1, 1) & \textcolor{gray}{0}0 & \textcolor{gray}{0}1 & \textcolor{gray}{0}1 & \textcolor{gray}{01} & \textcolor{gray}{01} & \textcolor{gray}{11} \\
    & (\textcolor{gray}{1}, 0, 0, 1) & \textcolor{gray}{0}0 & \textcolor{gray}{0}0 & \textcolor{gray}{0}1 & \textcolor{gray}{00} & \textcolor{gray}{01} & \textcolor{gray}{01} \\
    & (\textcolor{gray}{1}, 0, 0, 0) & \textcolor{gray}{0}0 & \textcolor{gray}{0}0 & \textcolor{gray}{0}0 & \textcolor{gray}{00} & \textcolor{gray}{00} & \textcolor{gray}{00} \\
\end{tabular}
\caption{List of all possible $d^n$ states, with $n=4$ features and domain size $d=2$. We list them in Gray code (i.e., only one bit changing at a time) and consider this as the order of expansion of $\text{IW}(2)$, ensuring the worst case scenario where the maximum number of states are considered novel. The third column shows the tuple combinations taken into account in the novelty test. Novel states and the feature tuples that make them novel are shown in bold. The first 7 states are novel, in part, due to tuples of features $f_1$, $f_2$, and $f_3$, and therefore belong to $S_{\neg{f_0}}$ while the last 4 novel states are novel exclusively due to tuples containing $f_0$, and belong to $S_{f_0}$. Values of $f_0$ and tuples that are irrelevant for the novelty test in subsets $S_{\neg{f_0}}$ and $S_{f_0}$ are shown in gray.}
\label{tab:N-rec-example}
\end{table}
\begin{table*}[t]
\centering
\begin{tabular}{lrrr}
Game & $\pi\text{-IW}(1)$ & $\pi\text{-IW}(1)\text{+}$ & $\pi\text{-HIW}(n,1)$ \\
\hline
Alien & 3969.78 & 2585.77 & \textbf{4609.18} \\
Amidar & 950.45 & 374.20 & \textbf{1076.17} \\
Assault & 1574.91 & 922.30 & \textbf{2344.28} \\
Asterix & \textbf{346409.11} & 247063.36 & 90017.25 \\
Asteroids & 1368.55 & \textbf{1490.87} & 990.95 \\
Atlantis & 106212.63 & \textbf{143177.73} & 17539.22 \\
Bank Heist & \textbf{567.16} & 256.29 & 501.68 \\
Battle zone & 69659.40 & 30848.95 & \textbf{309137.79} \\
Beam rider & 3313.11 & 8428.96 & \textbf{11931.41} \\
Berzerk & 1548.23 & 960.03 & \textbf{7417.26} \\
Bowling & 26.28 & \textbf{78.18} & 50.09 \\
Boxing & \textbf{99.88} & 88.19 & 6.81 \\
Breakout & 92.07 & 107.64 & \textbf{252.88} \\
Centipede & 126488.35 & \textbf{141070.19} & 80685.48 \\
Chopper command & 11187.44 & 3431.74 & \textbf{70787.12} \\
Crazy climber & \textbf{161192.01} & 138648.58 & 102205.99 \\
Demon attack & 26881.13 & \textbf{35022.64} & 16007.64 \\
Double dunk & \textbf{4.68} & -16.80 & 3.51 \\
Enduro & \textbf{506.59} & 63.83 & 44.47 \\
Fishing derby & \textbf{8.89} & -28.02 & -53.76 \\
Frostbite & 270.00 & 1636.51 & \textbf{7242.60} \\
Gopher & \textbf{18025.91} & 7061.76 & 15001.18 \\
Gravitar & \textbf{1876.80} & 1532.33 & 1154.01 \\
HERO & \textbf{36443.73} & 22097.39 & 36231.21 \\
Ice hockey & -9.66 & -4.02 & \textbf{-2.36} \\
James bond 007 & 43.20 & 205.91 & \textbf{1380.13} \\
Kangaroo & 1847.46 & 2918.98 & \textbf{6861.57} \\
Krull & 8343.30 & \textbf{13014.77} & 4121.81 \\
Kung-fu master & \textbf{41609.03} & 24871.94 & 20680.65 \\
Montezuma's revenge & 0.00 & 810.49 & \textbf{5275.89} \\
Ms. Pac-man & \textbf{14726.33} & 5916.86 & 4523.47 \\
Name this game & 12734.85 & \textbf{18167.55} & 9977.12 \\
Phoenix & 5905.12 & \textbf{7647.67} & 7508.63 \\
Pitfall! & -214.75 & \textbf{-2.46} & -128.82 \\
Pong & -20.42 & \textbf{2.14} & -9.70 \\
Private eye & 452.40 & 1766.13 & \textbf{29548.76} \\
Q*bert & 32529.60 & 23337.90 & \textbf{40449.72} \\
Road Runner & 38764.81 & 43813.29 & \textbf{87953.53} \\
Robotank & \textbf{15.66} & 9.68 & 10.63 \\
Seaquest & \textbf{5916.05} & 559.28 & 867.51 \\
Skiing & -19188.32 & \textbf{-13852.04} & -15417.86 \\
Solaris & 3048.78 & 1832.93 & \textbf{3524.69} \\
Space invaders & 2694.09 & 1622.49 & \textbf{2946.18} \\
Stargunner & 1381.24 & 1642.82 & \textbf{1864.64} \\
Tennis & -23.67 & \textbf{-8.26} & -20.00 \\
Time pilot & 16099.92 & 11126.86 & \textbf{34610.25} \\
Tutankham & \textbf{216.67} & 181.44 & 199.06 \\
Up'n down & \textbf{107757.51} & 59497.75 & 80991.07 \\
Venture & 0.00 & \textbf{15.68} & 10.73 \\
Video pinball & \textbf{514012.51} & 387308.60 & 184720.01 \\
Wizard of wor & \textbf{76533.18} & 30383.68 & 12027.43 \\
Yars' revenge & 102183.67 & 64544.51 & \textbf{159496.20} \\
Zaxxon & \textbf{22905.73} & 10159.01 & 21135.58 \\
\hline
\# best & 19 & 14 & 21 \\

\end{tabular}
\caption{Comparison of $\pi\text{-IW}(1)$, $\pi\text{-IW}(1)\text{+}$ and $\pi\text{-HIW}(n,1)$ over 53 Atari games. Best score given in bold. Results for Freeway are not included because the simulator was excessively slow compared to other games.}
\label{tab:atari}
\end{table*}

\end{document}